% This is samplepaper.tex, a sample chapter demonstrating the
% LLNCS macro package for Springer Computer Science proceedings;
% Version 2.20 of 2017/10/04
%
\documentclass[runningheads]{llncs}
\usepackage{graphicx}
% Used for displaying a sample figure. If possible, figure files should
% be included in EPS format.
%
% If you use the hyperref package, please uncomment the following line
% to display URLs in blue roman font according to Springer's eBook style:
% \renewcommand\UrlFont{\color{blue}\rmfamily}
\usepackage{amssymb}
\begin{document}
\title{Transition Graph Properties of Target Class Classification\thanks{Supported by grant №21T-1B314 of the Science Committee of MESCS RA.}}
%
%\titlerunning{Abbreviated paper title}
% If the paper title is too long for the running head, you can set
% an abbreviated paper title here
%
\author{Levon Aslanyan\inst{1}\orcidID{0000-0002-5354-2730
} \and
Hasmik Sahakyan\inst{2}\orcidID{0000-0002-8449-6845}}
\authorrunning{L. Aslanyan, H. Sahakyan}
% First names are abbreviated in the running head.
% If there are more than two authors, 'et al.' is used.
%
\institute{National Academy of Sciences of Republic of Armenia (NAS RA), Yerevan 0019, Armenia\\
\email{lasl@sci.am}\\
\url{http://www.sci.am}
\and
Institute for Informatics and Automation Problems of NAS RA, Yerevan 0014, Armenia\\
\email{hsahakyan@sci.am}\\
\url{http://www.iiap.sci.am}}
\maketitle              % typeset the header of the contribution
\begin{abstract}
Target class classification is a mixed classification and transition model whose integrated goal is to assign objects to a certain, so called target or normal class. The classification process is iterative, and in each step an object in a certain class undergoes an action attached to that class, initiating the transition of the object to one of the classes. The sequence of transitions, which we call class transitions, must be designed to provide the  final assignment of objects to the target class. The transition process can be described in the form of a directed graph, and the success of the final classification is mainly due to the properties of this graph. In our previous research we showed that the desirable structure of the transition graph is an oriented rooted tree with orientation towards the root vertex, which corresponds to the normal class. It is clear that the transition graph of an arbitrary algorithm (policy) may not have this property. In this paper we study the structure of realistic transition graphs, which makes it possible to find classification inconsistencies, helping to transfer it into the desired form. The medical interpretation of ”dynamic treatment regime” considered in the article further clarifies the investigated framework.

\keywords{Dynamic treatment regime \and Target class classification \and  Transition graph.}
\end{abstract}
\section{Introduction}

This paper considers a new mathematical formulation of a classification or pattern recognition problem, which differs from the traditional formulation of this problem [9,23,32] by its ideology. The classical classification problem from the theory of machine learning assumes the existence of a number of classes, which are known only at the level of some subsets of class elements [1,12,32]. Additional properties of these classes are assumed, for example - geometric compactness, smoothness or convexity etc. [2-5,10,33], or Occam's razor, and it is required to construct universal procedures capable of correctly classifying new objects by their classes [12,33]. The training stage of the algorithm by the set of available examples is basically a one-step process [1,32]. This step is followed by the testing and classification stages. 
In our application, which addresses a medical problem of treatment regimen adaptation [11] (and other similar problems), we pursue a different goal: - to put all objects into the same predefined class [6,7]. Depending on the state/class of the object, a certain action of this class is performed on it, and as a result, the object will move to another class. Transitions are not arbitrary and are limited by the state and action associated with it. In sequential action-transition steps, an object can transition between states, creating cycles and downtime, but can also transition to a target class in which no action is applied and no transition is performed. Our objective is to investigate conditions of models for successful (accurate) assignment of objects to the target class. This is in fact the study of the graph of transitions between classes. When actions are compound  or transition graphs are weighted [31], optimization problems arise about the best integrated policy. Our basic objective is to analyse the structure of graphs with one or more deterministic class actions, and we will analyze the properties of these graphs to transform the model into a suitable design.
In some sense this paper addresses a specific classification problem that does not fit into any of the known basic scenarios of supervised, unsupervised, or reinforced learning. However, being different from the traditional approach, Target Class Classification {\bf{TCC}} remains strongly associated with certain models of machine learning. The first association is with the concept of machine learning with reinforcement [28], which operates on a set of classes/states, and these classes and their elements make up the reinforcement learning environment. The agent (algorithm, recognizer) learns to move efficiently between classes in such a way as to optimize the target function about the final assignment of objects to a special target class (to achieve the goal). In this model, we are dealing with a dynamic, predictive rather than one-step classification problem.
Further, there are similarities with the classification of so-called unbalanced classes [21], where the need for overall accuracy of classification requires an emphasis on the classification of small classes (attention to the unseen). In contrast to this approach we are considering one regular class - the so-called target class [20,22], and seek for a procedure for assigning all objects to this target class (attention to the dominant class). The unbalanced classification itself has no algorithmic constraints and only appeals to the correct use of the whole training set. In contrast, our classification has algorithmic constraints that come from the subject domain (otherwise we would apply an action that transfers all objects into the target class in one step). 
It is to mention one more association that arises with sequential learning algorithms [13,17]. Sequential learning aims at narrowing the composition of classes, step by step, until the correct class for the object is found. In the case of our problem, sequential classification seeks to approximate the object being classified step by step to a single predetermined target class, and if the composition of classes is narrowed, it is only towards the target class. Here also the structure of classification is limited [34,35], it is based on the set of available data and rules of the subject domain [36,37], and it is required to optimize the strategy/algorithm of successful assignment of the object to the target class. 
We should note that the considered problem can be viewed from a different perspective - as a problem from the field of control theory or business process management theory. These theories, from our point of view, are effective in other circumstances, when functional environments are studied or when stochastic analysis is used. These theories, as a branch of mathematics use calculus, linear algebra, differential equations, linear programming and nonlinear optimization to analyze and design effective control systems. So we will use discrete mathematical instruments and we will stay in frame of terms of classification problems, because of our repetitive actions/transitions are linked to classes and because of the final objective is to classify to the target class.

An applied prototype of the {\bf{TCC}} concept is the well-known medical problem about the effective policy of treatment of chronic diseases [24,30]. The objectives of this group of medical approaches belong to the field of personalized (precision) medicine and are well known in terms of dynamic treatment regime (or adaptive treatment strategies) problems [24]. Known studies of this task focus on the statistical processing of treatment data from databases, databases that can be found also on the global web [11].

Section 1 brings Introduction to the research topic of the article. All the initial information necessary for the research in this paper is given in Section 2, where in Subsection 2.1 the research problem is defined, and in Subsections 2.2 and 2.3 the necessary concepts and definitions of the graph theory and supporting statements $P1-P5$ (by [14,19]) are given. Section 3 starts with mentioning two theorems from [7] on the strongly simplified version of the $TCC$ problem, and then presents two new results of this paper, Theorem 1 and Theorem 2, shedding light on the structure of the transition $TCC$ graph for the case where the transition graph is more realistic - with loops and with many outgoing links. The discussion and comments mention a way of using the results to identify and correct unnecessary defects in $TCC$ processes.

\section{{\bf{TCC}} concept and the related technique}

Target Class Classification, basically, is a data-driven procedure [18,24]. Suppose we need to make at most $k$ local sequential classifications, $a_1,a_2,...,a_k$ per object (a patient, in the medical treatment problem). $S_1$ denotes the initial class (pre-treatment information) whereas $S_j,1<j \le k$ denotes the intermediate outcome information available after decision $a_{(j-1)}$ and prior to decision $a_j$. Thus the time order of states and actions is $S_1,a_1,S_2,a_2,...,S_k,a_k$. This is an individual track, and the set of tracks over the population of individuals compose a trellis. Let $\bar{S_j} =\{S_1;\ldots;S_j\}$ denote the past and present information at time $j$. The primary outcome is $Y=u(\bar{S}_{k+1};\bar{a}_k)$ where $u$ is a known integrative function. 

Let us apply to an example from [24].
In the addiction management study, in medicine, $Y$ may be the main percent of days abstinent so $u$ counts the number of days abstinent and divides by treatment length in days. An adaptive treatment strategy is a sequence of decision rules, one per each individual decision point. Thus we denote an adaptive treatment strategy by the decision rules tuple $\{d_1,d_2,...,d_k\}$ where the decision rule $d_j$ takes the information available at time $j$. $\bar{S}_j=\{S_1,S_2,...,S_j\}$ and past treatment $\bar{a}_{(j-1)}=\{a_2,...,a_(j-1)\}$ outputs a treatment type/level, $a_j$. For example in the addiction management study, $k=2$ and the possible treatments, values for $a_1$ at the first decision time point are {\bf{med}} (drug) and {\bf{cbt}} (cognitive behavioral therapy) [24]. The information available for making the second decision includes pre-treatment information denoted by $S_1$, the first treatment, and the first intermediate outcomes denoted by $S_2$. A simple example of a decision rule, $d_2$, is: if the individual responds to initial treatment assign telephone monitoring with counselling or if the individual does not respond to initial treatment assign {\bf{em}} (enhanced motivational program) + {\bf{med}}+{\bf{cbt}}. The structure of the trellis can be very different. The initial states (classes) may be different, but in that case, it is possible to group tracks with the same initial state, which will simplify the preliminary analysis. The problem is also in tracks having different lengths, they may end in different states.
Practical data of this type, subject to the structure of transitions and tracks, may be available only partially or in aggregate format [18] due to privacy circumstances. Sometimes only the initial and final states of tracks and their lengths are given. Sometimes only transitions, - the previous state and the next state, or the number of these transitions are published from the data [26,27]. The available data, in turn, characterize the applied algorithms of state classification. Sequence profile type algorithms use state frequencies, while HMM profile class algorithms can use both first-order and higher-order transition frequencies, which is similar to the reinforcement learning and $MDP$ approaches [25,28]. The algorithm we consider below is the simplest for the simplest case of the problem. It considers first-order transitions, not chains, - and the result of the transition is defined by a single deterministic value. The problem is to understand the constraints for assigning objects to the target class. And if the practical information do not satisfy the constraints identified in the study, this indicates the need to verify, check data, and revise the decision-making strategy, or require the adoption of a new hypothesis about the processes and data.
Our objective is to optimize adaptive treatment strategies, that is, to create a treatment strategy that produces the best mean outcome value. A number of trials have been and are being conducted.
In order to name the main goals of the $TCC$, let us define the points of objectives [7]:
\begin{itemize}
\item
{\bf{Validation}} Given a transition graph, it is necessary to determine whether the corresponding process actually assigns all or most of the objects to the target class.
\medskip

\item
{\bf{Superiority}} Two policies are given, it is necessary to clarify which of them determines the process of the best (accurate) classification of objects into the target class.
\medskip

\item
{\bf{Optimization}} This is the main procedure of $TCC$. It looks tor a better composite policy by analysing the set of input policies or input data sets. Usually, the means of analyses is discrete mathematical analysis, the matrix analysis, Markov chains and HMM, etc.
\end{itemize}
	 
In most cases, the goal of these trials is to develop adaptive treatment strategies (treatment sequences for a particular person in a corresponding baseline) that are not ideal in principle. That is, the goal does not include confirmation that one adaptive treatment strategy is better than all others or standard treatment. Such trials should be followed by confirmatory trials comparing the optimized adaptive treatment strategy with an appropriate control experiment or standard treatment.

\subsection{Transition graphs, definitions, properties}
\subsubsection{Transition Graphs}
A transition graph $G=(V,E)$, abbreviated $TG$, is a collection of the following parts [16,31]:
\begin{enumerate}
\item 
	A finite set $V$ of states (vertices), that may be partitioned in three categories, among which there are initial states and final states. For the initial states, an activation procedure in envisioned.
\item 
	A set $A$ of possible actions that may be applied to these states.
\item 
	A finite set $E$ of transitions (edges) that show how to transition from some states to some others.
 \end{enumerate}
A normalized weight (probability) distribution over a finite set, $U$, is a function $F:U \to [0,1]$ such that $\sum_{u \in U}F(u)=1$. The support of this distribution $F$ is the set $support(F):=\{u \in U | F(u)>0\}$. For $\forall v \in V$ normalized weight distribution $F_v$ is defined on the set $N(v)=support(F_v)$ of neighbor vertices of $v$. An example of  graphical and algebraic presentations of a simple $TG$ is given in Fig. \ref{figtg}.

\begin{figure}
\includegraphics[width=\textwidth]{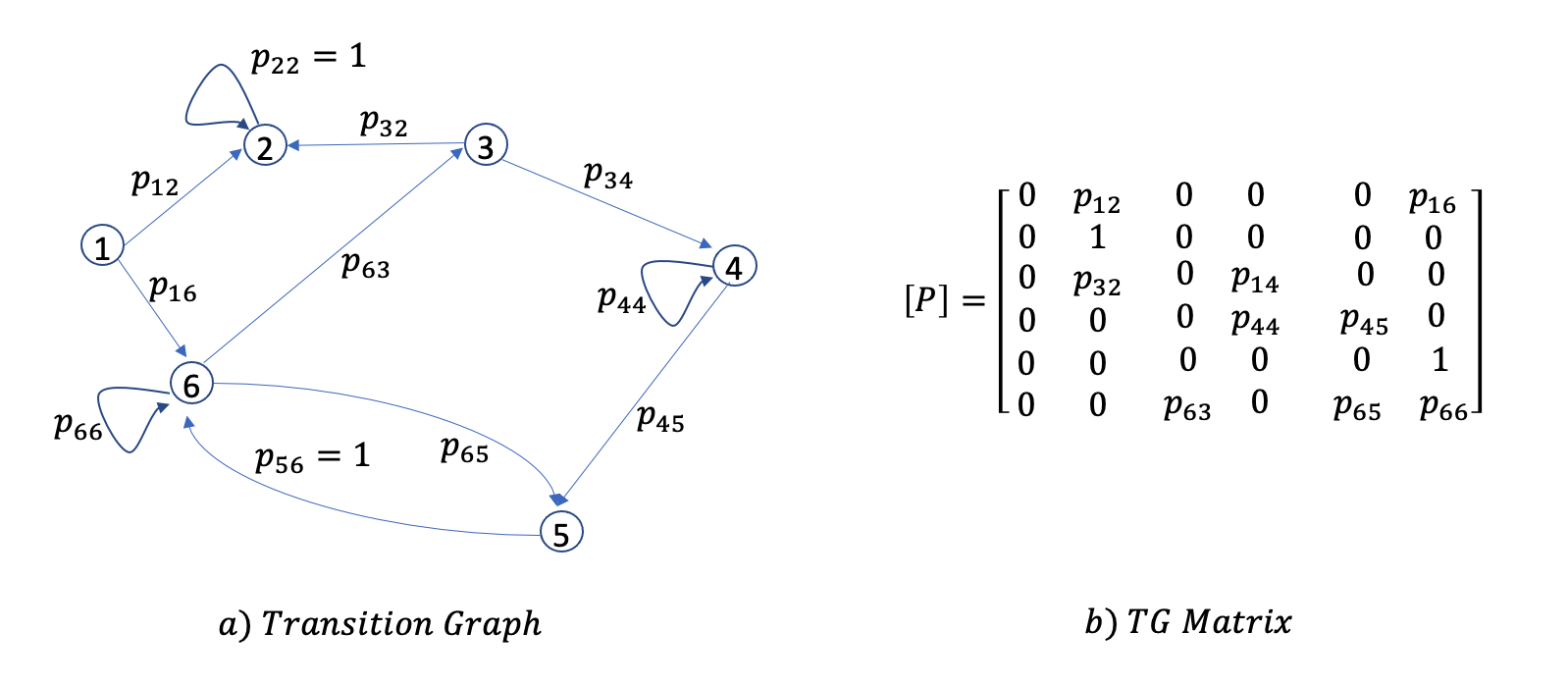}
\caption{An example TG in graphical form a) and in its algebraic form b). Rows in b) correspond to vertices of TG, and these rows sum to 1. A successful path through a transition graph is a series of edges forming a path beginning at some start state and ending at a final state. Concatenating the edges visited will yield the path string.} \label{figtg}
\end{figure} 

TGs were invented in 50’s. Today a plethora of TGs are defined and used in theories of Automata, in Markov chains and MDP, in Reinforcement learning and Control theory [16,25]. Problems addressing in this theories are related to the description and analysis of sequences of actions and time series (path analytics, recognition of languages); study of specific events, when they may happen and when not, with estimation of their chances; study of limit behavior of systems; likelihood estimates and satisfiability statistics; complexity type, and other issues. Historically, the automata theory developed entirely separately from the theory of stochastic processes and stochastic optimal control, with each developed by a separate mathematical community having distinct motives. It turns out, however, that there are fruitful connections between these fields. In particular, a number of classic infinite-state automata theoretic models, such as one-counter automata, context-free grammars, and pushdown automata, are in fact closely related to corresponding classic and well-studied countably infinite-state stochastic processes. Roughly speaking, such automata-theoretic models share the same (or, a closely related) underlying state transition system with corresponding classic stochastic processes.

Upon reflection, it should not be entirely surprising that this is the case. After all, Markov chains are nothing other than probabilistic state transition systems. In order for a class of infinite-state Markov chains to be considered important, it should not only model interesting real-world phenomena, but it should also hopefully be “analyzable” in some sense. Better yet, its analyses should have reasonable computational complexity. But these same criteria also apply to infinite-state automata-theoretic models: their relevance is at least partly dictated by whether we have efficient algorithms for analyzing them.

Clearly, we cannot devise effective algorithms for analyzing arbitrary finitely-presented countably infinite-state transition systems. For example, Turing machines are clearly finitely presented, but we cannot decide whether a Turing machine halts, i.e., whether we can reach the halting configuration from the start configuration. Furthermore, if we consider probabilistic Turing machines (PTMs), we easily see that there cannot exist any algorithm that computes any non-trivial approximation of the probability that a given probabilistic Turing machine halts.

\subsubsection{Directed Graphs}

When parallel edges and loops are admissible in an extension to the simple graph definition, we say that undirected pseudographs are given; pseudographs with no loops are multigraphs [8,15,19]. For a pair $u,v$ of vertices in a pseudograph $G$, $\mu G(u,v)$ denotes the number of edges between $u$ and $v$. In particular, $\mu G(u,v)$ is symmetric, and $\mu G(u,u)$ is the number of loops at $u$.
A directed graph (or just digraph) $D$ consists of a non-empty finite set $V(D)$ of vertices, and a finite set $A(D)$ of arcs – ordered pairs of adjacent vertices. In case of multiple arrows the entity is usually addressed as directed multigraph. $\mu G(u,v)$ is extended easily to the digraphs. Simple directed graphs are directed graphs that have no loops (arrows that directly connect vertices to themselves), and no multiple arrows with the same source and target nodes. Some authors describe digraphs with loops as loop-digraphs. Directed acyclic graphs (DAGs) are directed graphs with no directed cycles. Rooted trees are oriented trees in which all edges of the underlying undirected tree are directed either away from or towards the root. Some basic results about the digraphs may be found in [14,19], among them are:

{\bf{P1.}}	Every acyclic digraph has an acyclic ordering of its vertices; in this case vertices can be labelled $v_1,v_2,\cdots,v_n$ in a way that there is no arc from $v_i$ to $v_j$ unless $i<j$.

{\bf{P2.}}	Every acyclic digraph has a vertex of in-degree zero, as well as a vertex of out-degree zero.

A strong component of a digraph $D$ is a maximal induced subgraph of $D$ which is strong (strongly connected). If $D_1,D_2,\cdots,D_t$ are the strong components of $D$, then clearly and disjointly $V(D_1)\cup V(D_2)\cup \cdots 
\cup V(D_t)=V(D)$ (recall that a digraph with only one vertex is strong). The strong component digraph $SC(D)$ of $D$ is obtained by contracting the strong components of $D$ and deleting any parallel arcs obtained in this process. In other words, if $D_1,D_2,\cdots,D_t$ are the strong components of $D$, then $V(SC(D))=\{v_i | i \in [t]\}$ and $A(SC(D))=\{v_i v_j | (V(D_i),V(D_j))_D=\emptyset \}$.

{\bf{P3.}}	The subgraph of $D$ induced by the vertices of a dicycle in $D$ is strong, and hence is contained in a strong component of $D$. 

{\bf{P4.}}	$SC(D)$ is acyclic, the vertices of $SC(D)$ have an acyclic ordering.

An orientation of a simple graph $G$ is an oriented graph $H$ obtained from $G$ by replacing every edge $xy$ by either the arc $(x,y)$ or the arc $(y,x)$. The underlying graph $UG(D)$ of digraph $D$ is an undirected graph obtained from $D$ by replacing the set of arcs between $x$ and $y$ with one undirected edge $xy$. A digraph is connected if its underlying graph is connected. 

A digraph $D$ is an oriented tree, if $D$ is an orientation of a tree. A digraph $T$ is an out-tree (an in-tree) if $T$ is an oriented tree with just one vertex $s$ of in-degree zero (out-degree zero). The vertex $s$ is the root of $T$. If an out-tree (in-tree) $T$ is a spanning subgraph of $D$, $T$ is called an out-branching (an in-branching).

{\bf{P5.}}	A connected digraph $D$ contains an out-branching (in-branching) if and only if $D$ has only one initial (terminal) strong component.

In the graph theory, a cactus is a connected graph in which any two simple cycles have at most one vertex in common. Equivalently, it is a connected graph in which every edge belongs to at most one simple cycle. Similarly, a directed cactus is a strongly connected digraph in which each arc is contained in exactly one cycle. The construction we will consider below is composed of a set of oriented cycles, thus, being a strongly connected component of some initial graph, plus a number of branches directed towards the cycles.

\section{$TCC$ Graph Properties}
$TCC$ graphs are transition graphs, which in general are digraphs. Depending on the type of transitions in $TCC$ - deterministic or stochastic, one or many actions, transition branching, - the structure of this graphs obtain additional properties becoming a narrower subclass of transition digraphs. Simple deterministic $TCC$ ($sdTCC$) is when transition from any class-action pair is to a unique class: $(c_i,a_(c_i )) \to c_j$. In such $TCC$ graph, all but one vertex have exactly one outgoing arc (in simple digraph this can’t be a loop). Vertex $v_0$, corresponding to the target class, have out-degree $0$. Initial properties of $sdTCC$ graphs (given in [7]) are as follows:

{\bf{T1.}} If graph $G$ of a $sdTCC$ model is connected (semi-connected digraph), then $G$ is a tree with the root at vertex $v_0$ and with edges oriented from the terminal/leaf, as well as internal vertices of the tree towards the direction of the vertex $v_0$.

{\bf{T2.}} Arbitrary graph $G$ of a $sdTCC$ consists of one in-branching tree, rooted at $v_0$, and several other connected components structured as one cycle cactus graph. The cactus cycle is oriented, and the tree-like components are connected to cycle vertices having orientation to the cycle.

In this paper, we  consider extension of these assertions to other classes of $TCC$ graphs. The first case is that of $TCC$ with loops. In terms of the application problem, a loop is the case of classification-transition to the same class. We differentiate three types of oriented graphs:  graph of type $v_0$, graph of type "loop", and graph of type "cactus", as shown in Fig.\ref{fig3}.

\begin{theorem}
The structure of a graph $sdTCC$ with loops allowed includes one connected component of type $v_0$, and possibly, several components  of types "loop", and "cactus".
\label{3g}
\end{theorem}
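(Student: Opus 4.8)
\emph{Proof idea.} The plan is to regard a $sdTCC$ graph with loops as a near-functional digraph and to classify it one component (of $UG(G)$) at a time. In this model every vertex except $v_0$ has out-degree exactly $1$ while $v_0$ has out-degree $0$, and allowing loops only means that the single out-arc of some vertex $v$ may be the loop $(v,v)$. For $v\neq v_0$ let $f(v)$ be the head of the unique arc leaving $v$. Since the two endpoints of an arc lie in the same component of $UG(G)$, every arc stays inside one component; hence on a component that avoids $v_0$ the map $f$ is a total self-map.

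First I would isolate the component $C_0$ containing $v_0$. Counting arcs by their tails, exactly $|C_0|-1$ arcs have tail in $C_0$ (one per vertex other than $v_0$), and all of them have head in $C_0$. None of them can be a loop and no two of them can be reciprocal: $UG(C_0)$ is connected on $|C_0|$ vertices and so needs at least $|C_0|-1$ non-loop edges, whereas $|C_0|-1$ arcs yield at most $|C_0|-1$ non-loop edges, and strictly fewer if a loop or a reciprocal pair occurred. Therefore $UG(C_0)$ is a tree and $G[C_0]$ is an orientation of it, i.e.\ an oriented tree. Summing out-degrees, $v_0$ is the only vertex of out-degree $0$ in $C_0$ (two such vertices would force fewer than $|C_0|-1$ arcs), so by the definition of in-tree recalled before $P5$, $G[C_0]$ is an in-tree rooted at $v_0$. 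This is the component ``of type $v_0$'', and it is the only one, since $v_0$ is a single vertex.

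Next I would treat a component $C$ with $v_0\notin C$, on which $f$ is a total self-map. Iterating $f$ from any vertex must, by finiteness, enter a cycle (a loop being a cycle of length $1$); write $\mathrm{cyc}(v)$ for the cycle eventually reached from $v$. The key step is that $\mathrm{cyc}$ is constant on $C$: if $(u,v)$ is an arc then $v=f(u)$, so the forward $f$-orbit of $v$ is that of $u$ with its first vertex deleted, whence $\mathrm{cyc}(u)=\mathrm{cyc}(v)$; thus $\mathrm{cyc}$ agrees across every edge of the connected graph $UG(C)$ and is therefore constant. Consequently $C$ has exactly one cycle $Z$, its vertices send their unique out-arc around $Z$, and the remaining vertices attach to $Z$ as in-trees with all arcs oriented towards $Z$ (for $v\notin Z$, group $v$ by the first vertex of $Z$ met along its $f$-orbit; each group together with that vertex is acyclic, connected, and has every non-root vertex of out-degree $1$, hence an in-tree). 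If $|Z|=1$ this is a ``loop''-type component; if $|Z|\ge 2$ it is a one-cycle oriented cactus with tree branches oriented to the cycle, i.e.\ a ``cactus''-type component, exactly as in $T2$. Assembling the three cases proves the claim: the ``$v_0$''-component is always present, and the rest are ``loop''- or ``cactus''-components, possibly none.

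I expect the only genuinely delicate points to be (i) the bookkeeping for $C_0$ that simultaneously rules out loops, rules out reciprocal pairs of arcs, and forces $UG(C_0)$ to be a tree; and (ii) the proof that $\mathrm{cyc}$ is constant on a component, which is exactly what gives ``one cycle per component'' and hence the clean trichotomy. The remaining structure-of-in-trees steps are routine and parallel the loop-free statements $T1$ and $T2$.
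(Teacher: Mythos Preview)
Your argument is correct and shares the paper's core mechanism: exploit that every vertex except $v_0$ has out-degree~$1$, so iterating the unique out-arc from any vertex eventually lands in $v_0$, in a loop vertex, or in a genuine dicycle, and the vertices feeding into each such terminal object form in-trees. Where you diverge from the paper is in the treatment of the component $C_0$ containing $v_0$. The paper handles all vertices uniformly by path-following (``follow the outgoing edge \ldots until the path closes or hits $v_0$/a loop'') and simply asserts that the paths entering $v_0$ intersect to form an oriented tree. You instead give a global edge-count: $|C_0|-1$ arcs with tails in $C_0$, and connectivity of $UG(C_0)$ forcing exactly $|C_0|-1$ distinct non-loop undirected edges, hence simultaneously ruling out loops and reciprocal pairs in $C_0$ and pinning $UG(C_0)$ down as a tree. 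This is a genuinely different (and tighter) argument for that piece; it proves cleanly something the paper's sketch glosses over, namely that no loop vertex can sit in the same undirected component as $v_0$. Your explicit ``$\mathrm{cyc}$ is constant on a component'' lemma also makes precise the step the paper leaves as ``paths enter the cycle and may intersect forming tree-like fragments''. In short: same functional-digraph strategy, but your decomposition (counting for $C_0$, an invariance lemma for the rest) is more structured than the paper's uniform path-following sketch, and it buys you a rigorous exclusion of loops and $2$-cycles inside $C_0$ that the paper does not spell out.
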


\begin{proof}
Proof of this theorem is similar to the proof of T1 and T2 given in [6,7]. $TCC$ objective is satisfied when the graph consists of only one component of type $v_0$ (no more vertices or components outside of this component). Fig. \ref{fig3} clearly indicates the possible defects when there are more than one component. The root cause of all defects is disconnectedness. Oriented cycles and loops cause unnecessary or unimportant transitions between classes.

\begin{figure}
\includegraphics[width=\textwidth]{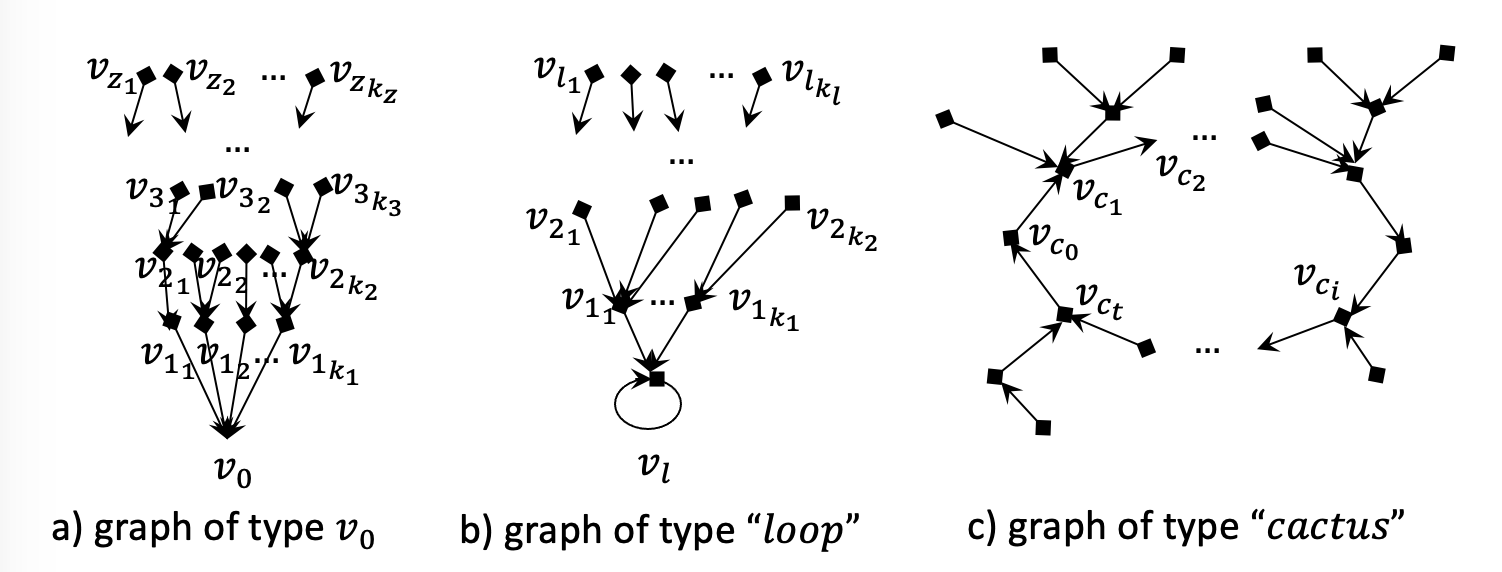}
\caption{Graph types that may appear as components of general $sdTCC$ (loops allowed). Graph of type $v_0$ is an in-branching tree to the vertex $v_0$ that corresponds to the target class. Graph “loop” is similar to case a) and their number in $sdTCC$ is equal to the number of  “loop” vertices. The reminder components of $sdTCC$ have structure of simple cactus graphs with one oriented cycle with incoming tree-type fragments like one given in c). Number of vertices and structural particularities at components can be diverse.} \label{fig3}
\end{figure} 

Special mention should be made of the role of loops at the vertices of the graph under the assumption that they are admissible. It is necessary to distinguish between isolated and non-isolated loops. The former stand apart and do not affect the overall structure of the graph. The second group of loops also do not introduce additional difficulties, they just behave similar to the vertex $v_0$.

For Theorem \ref{3g} we just prefer to bring the sketch of the proof. Consider an arbitrary vertex $u$ and follow by its outgoing edge $e_u$ to the end point $v$ of this edge. Continue the path by the edge $e_v$ from $v$ and so on, and in this way valid oriented path $u,e_u,v,e_v,...$ will be constructed. The number of vertices and edges is finite so that at some step we will come back to some vertex of the path. At this point we obtain one oriented cycle and, may be, that an oriented path, that goes to the vertex that formed the cycle. This means that there may be several oriented paths that arrive into the vertices of the cycle. Paths that enter into the same vertex of cycle may intersect with each other forming a tree like fragment (point c) of the Fig. \ref{fig3}).

This process of forming cycles may be interrupted in two cases. These are when the path considered above arrives to one of the loop vertices, or, when it enters into the vertex $v_0$. Again, the paths that enter to $v_0$ or to a particular loop vertex, may intersect forming an oriented tree to these special vertices (points a) and b) of the Fig. \ref{fig3}). This is the whole sketch of the proof of this theorem.  $\square$
\end{proof}

$TCC$ graph as a digraph may have more of the properties of general digraphs. We outline specific properties that are related to the graph of our applied medical TCC problem.

Thus we need to study the structure of $TCC$ graphs in supposition that it is a digraph with: - no multiple edges, but loops, and one or more outgoing arcs from vertices except the $v_0$. By this, clearly, we consider the case of $TCC$ with several classifiers. The role of this model is twofold. Imagine several separate classifiers $A_1,A_2,\cdots,A_m$. From the point of view of the applied task, this is a group of medical institutions, each of which operates according to its own standard (policy) for prescribing procedures and with the practice of transition to other states. From the point of view of the $TCC$ graph, in this case one or more edges emanate from a vertex to other classes ($mcTCC$). The problem in this case is the following: which of these strategies is the best one, or is it possible to combine the available strategies into one single and optimal strategy. 

The same $mcTCC$ graph also arises within the framework of the stochastic model of TCC (which we do not consider in the paper), as follows. Suppose that actions in classes are assigned based on probabilities (full probabilities), or transitions are made according to some probabilities. This leads to a graph, with probability-weighted edges coming out of its vertices. Eliminating edges of zero probability, and considering  the simple graph with all other edges, we arrive to the same $mcTCC$ graph which models the logical level structure of the possible transitions.

\begin{theorem}
The structure of $TCC$ graph with many classifiers ($mcTCC$) obeys the structural characterisations given in points $\alpha)-\epsilon)$ below, summarised in Fig.3.
\end{theorem}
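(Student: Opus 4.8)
The plan is to lift the analysis of the single-outgoing-arc cases (T1, T2, Theorem~\ref{3g}) to the multi-classifier setting by reasoning about the strong component digraph $SC(D)$ rather than about individual out-edges. First I would invoke P3 and P4: the strong components $D_1,\dots,D_t$ partition $V(D)$, and $SC(D)$ is acyclic, hence (by P1) admits an acyclic ordering of its vertices. This fixes the coarse skeleton common to all $mcTCC$ graphs --- a DAG of super-vertices --- and splits the problem into two questions: what a single strong component looks like internally, and how the components (together with $v_0$) sit inside this DAG.

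For the internal question the decisive local fact is the degree constraint of $mcTCC$: $v_0$ has out-degree $0$, every other vertex has out-degree at least $1$, and $D$ has no multiple arcs. I would sort the strong components into three kinds. A one-vertex component is either $\{v_0\}$, or a vertex $u$ all of whose out-arcs leave it (an ordinary branch vertex, possible precisely because $u$ lies on no dicycle), or a vertex $u$ carrying a loop. A component with at least two vertices is strongly connected, so by P3 each of its vertices lies on a dicycle inside it; in the restricted model of T2 this subdigraph is a single oriented cycle, and more generally it is a strongly connected subdigraph whose ``skeleton'' of cycles one wants to identify with a directed cactus. Points $\alpha)-\epsilon)$ I would then phrase along these lines: $SC(D)$ is acyclic; the terminal strong components are exactly of type $v_0$, ``loop'', or ``cactus'' (more generally, strongly connected); each is fed by an acyclic many-to-one flow of branches; a connected $mcTCC$ graph meets the $TCC$ objective if and only if $\{v_0\}$ is its unique terminal strong component; and in that case P5 supplies an in-branching rooted at $v_0$ along which every track descends to $v_0$.

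For the arrangement question, P2 gives at least one sink of $SC(D)$ --- a terminal strong component --- and the trichotomy above shows such a sink is $\{v_0\}$, a loop vertex, or a nontrivial strongly connected blob. Here the essential departure from $sdTCC$ appears: whereas T1/T2 forces each connected component to contain a single terminal strong component, a connected $mcTCC$ graph may now contain several, precisely because one vertex can emit out-arcs toward several of them. Consequently the objective of routing every object to the target is met only when $\{v_0\}$ is the unique such sink and the graph is connected; every other terminal strong component, together with the branches and DAG-arcs feeding it, is a trap, and these are exactly the defects the structural description is meant to flag for re-checking or revising the decision strategy --- the point made after Theorem~\ref{3g}.

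The main obstacle is the loss of the forest structure that made Theorem~\ref{3g} transparent. In $sdTCC$ the ``follow the unique out-edge and wait for a repeated vertex'' argument terminates cleanly in one cycle or one special vertex, so the in-branches are genuine trees and each connected component has exactly one terminal strong component; once a vertex may have several out-arcs, the in-branches become an arbitrary acyclic many-to-one flow, a connected component may have several terminal strong components, and a nontrivial strong component need not be a single oriented cycle. Establishing precisely how constrained these strong components remain --- whether they are forced to be directed cacti, as the cactus discussion preceding this section suggests, or can be arbitrary strongly connected subdigraphs --- and checking that everything outside the terminal components is organised as a DAG pointing into them, is where the case analysis and the repeated appeals to P2--P5 will be concentrated; the remainder follows the template of the proofs of T1, T2 and Theorem~\ref{3g}.
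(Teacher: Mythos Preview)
Your global strong-component approach is sound, but it is a genuinely different decomposition from the one the paper uses for $\alpha)$--$\epsilon)$. The paper does not start by forming $SC(D)$; it first peels off a reachability set $G_0$ consisting of all vertices that can reach $v_0$, built layer by layer (in-neighbours of $v_0$, then their in-neighbours, and so on), and records a spanning in-tree of ``real'' edges while flagging as ``virtual'' the extra edges that go upward within $G_0$ (creating dicycles inside $G_0$) or leave $G_0$ altogether. It then repeats this for the terminal-loop vertices, obtaining $G_l$. Only on the remainder $G-G_0-G_l$ does it run the path-following argument of Theorem~\ref{3g} to produce explicit dicycles, group them by an intersection equivalence into clusters $\theta_i$, and finally invoke P1/P4 on $SC(G-G_0-G_l)$ to order those clusters and the remaining one-vertex components acyclically. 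So the paper's $\alpha)$--$\epsilon)$ are: extract $G_0$; extract $G_l$; find cycles in the rest; cluster intersecting cycles into strong components; acyclically order $SC(G-G_0-G_l)$.

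Your route is cleaner and more uniform --- one pass of $SC(D)$, then a trichotomy on sinks via P2 --- and it gives the same qualitative conclusions (the traps are exactly the non-$v_0$ terminal strong components; success is equivalent to $\{v_0\}$ being the unique sink, with P5 supplying the in-branching). What the paper's decomposition buys is closer alignment with the application: $G_0$ is precisely the ``good'' set of states from which the target is reachable, and isolating it first, with its internal virtual cycles displayed explicitly, is what lets one read off the practical defects directly from Fig.~\ref{figv0} and Fig.~\ref{figg}. Note also that in your picture a dicycle lying inside $G_0$ becomes a separate non-terminal strong component, whereas the paper keeps it inside $G_0$ as a virtual back-edge; and the paper does not attempt to show the nontrivial components are directed cacti --- it stops at ``union of pairwise-intersecting dicycles'' --- so that speculation in your last paragraph goes beyond what $\alpha)$--$\epsilon)$ actually establish.
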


\begin{proof}
$\alpha)$ Let us took the vertex $v_0$ of the target class, and similar to the reasoning of $T1$ and Theorem 1, construct the component $G_0$ of graph $G$, where $v_0 \in G_0$. It is asserted that some structure like in Fig. 2 a). is contained in graph $G_0$. 
The first layer of this structure (vertices with indexes $1_1,1_2,\cdots,1_{k_1}$ in the Fig. \ref{figv0}), contains all vertices from which there exists a transition to $v_0$, i.e. from the reminder part of the graph the transition to $v_0$ is already excluded. In this step and in further steps,  all top-down transitions except those that are used in the constructed tree are excluded. Let us call these (already used) connections real. 

However, in contrast to T1., in this new case of $mcTCC$ bottom-up transitions are also possible, for example, from layer $1$ to layer $3$ (see Fig. \ref{figv0}), which can make cycles. Two cases are possible: one when link goes from a lower layer vertex $u$ to a higher layer vertex $v$, so that the oriented path from $v$ to $v_0$ passes through $u$ and an oriented cycle is formed, and the opposite case, when the forming of cycle is not immediate, and it depends on the structure of down-up edges concerned to the vertex $v$. Any such transition creates a problem of assigning the object to the normal class. With an oriented cycle, transitions can become an infinite process without reaching $v_0$.

Another also new and important group of transitions can lead from vertices of $G_0$ different from $v_0$, to vertices of graph $G-G_0$ that are outside the component of $G_0$. These two groups of connections, which are possible but not necessary, we call  virtual ones – when they exist.

\begin{figure}
\includegraphics[scale=0.4]{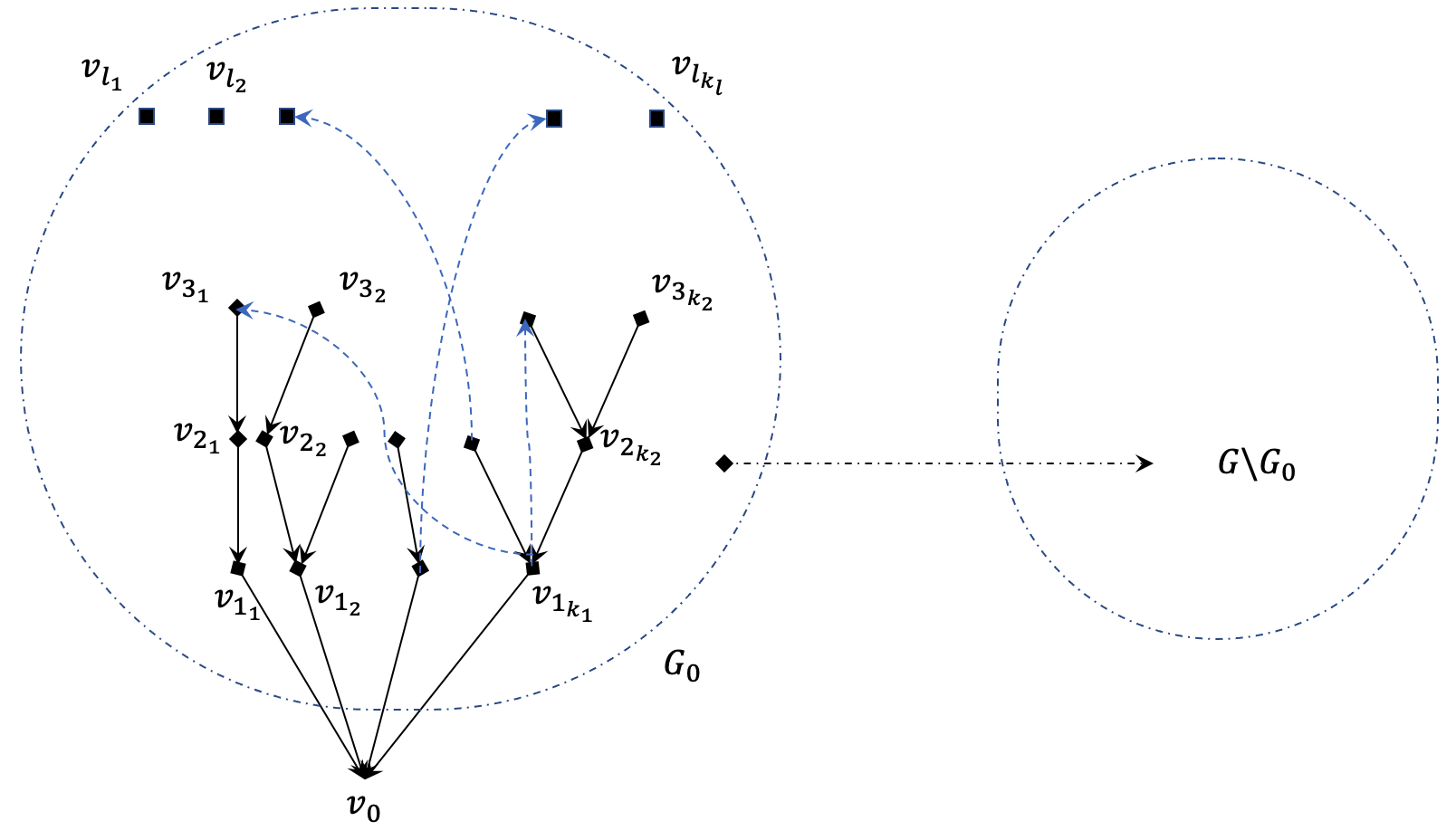}
\caption{The core part of target class classification graph structure and its relation to the complementary part of the structure.} \label{figv0}
\end{figure}  
\bigskip
$\beta)$ First, let us distinguish all vertices with loop, that are isolated from other vertices of the graph. Obviously, such vertices represent dangling components of connectivity and are as if outside the main structure of the graph. 
Further, let us distinguish the class of loops in vertices from which no regular (no loop) edges come out. These vertices, as we can easily check, initiate components similar to component $G_0$ of $v_0$. Let us denote their united graph by $G_l$. It is important to note that the components of type $a)$ and $b)$ from Fig. \ref{fig3} which also arise in this case are not strongly connected components. This is explained by the fact that no edges outgo from the roots of such trees and no edges enter into the terminal vertices. At the same time, these components are not isolated, and connections with other components can come from them. As for other vertices, that are with loops, they behave like other usual (non-loop) vertices in the structure of the whole graph.
\\

$\gamma)$ Now consider an arbitrary vertex $x$ of the graph $G-G_0-G_l$. At least one edge originates from $x$, let it be an edge $e$ that leads to some vertex $y$. Vertex $y$ can belong only to $G-G_0-G_l$. 
Continuing the started chain step by step, as before, and we will reach an intersection with the path at some point $z$, forming an oriented cycle. We are talking about a nondegenerate cycle, i.e., not a cycle consisting of a single vertex (loop vertex). This is true, because at this stage there is always an outgoing regular edge from a vertex $z$ and we are talking about closing the path of this edge. The segment from $x$ to $z$ if $x \neq z$ remains outside the cycle and forms a branch to it, being oriented towards the cycle. This holds for all vertices $x$ belonging to $G-G_0-G_l$. Here too, we refer to the entire chain structure as real, and the branches outside the chain as virtual.
\\

$\delta)$ Consider all nondegenerate cycles that are composed in the constructions of point $\gamma)$ over all vertices $x \in G-G_0-G_l$. Some pairs of these cycles may intersect. Intersection means existence of a common vertex but even more intersection does not prevent further reasoning that we do. Note that our cycles are oriented (dicycle) and that an oriented cycle is a strongly connected subgraph (see $P3.$). Two cycles with a common vertex also constitute a strongly connected structure. This construction could be extended by new dicycles when they have intersection with at least  one of the initial two cycles. If we define the equivalence relation over pairs of cycles $(p,q)$ as the fact of existence of chain of pairwise intersecting cycles from the first cycle $p$ to the second cycle $q$, then this equivalence relation will partition the set of all cycles of step $\gamma)$ into non-intersecting clusters. These clusters $\theta_1,\theta_2,\ldots,\theta_k$ are part of the so-called strong component digraph $SC(G-G_0-G_l)$ inside the graph $G-G_0-G_l$. $SC(G-G_0-G_l)$ is constructed by contracting the clusters $\theta_i$ into vertices $t_i$, and deleting any parallel arcs obtained in this process. Other components are $1$-vertex strong connections.
\\

$\epsilon)$ It is known (see $P1.$ and $P4.$) that each strong component digraph $SC(G-G_0-G_l)$ has acyclic ordering of its vertices. This implies that the strong components included
in $G-G_0-G_l$ can be labelled as $\theta^1,\theta^2,\ldots,\theta^l,\theta_1,\theta_2,\ldots,\theta_k$ in a way, 
that there is no arc from $\theta_j$ to $\theta_i$, as well as from $\theta^j$ to $\theta^i$  unless 
$j < i$. 
Here $\theta^1,\theta^2,\ldots,\theta^l$ correspond to $1$-point $SCC$. We call such an ordering an acyclic ordering of the strong nondegenerate components of $G-G_0-G_l$. Thus the following schematic representation of graph $TCC$ is valid.

 \begin{figure}
\includegraphics[width=\textwidth]{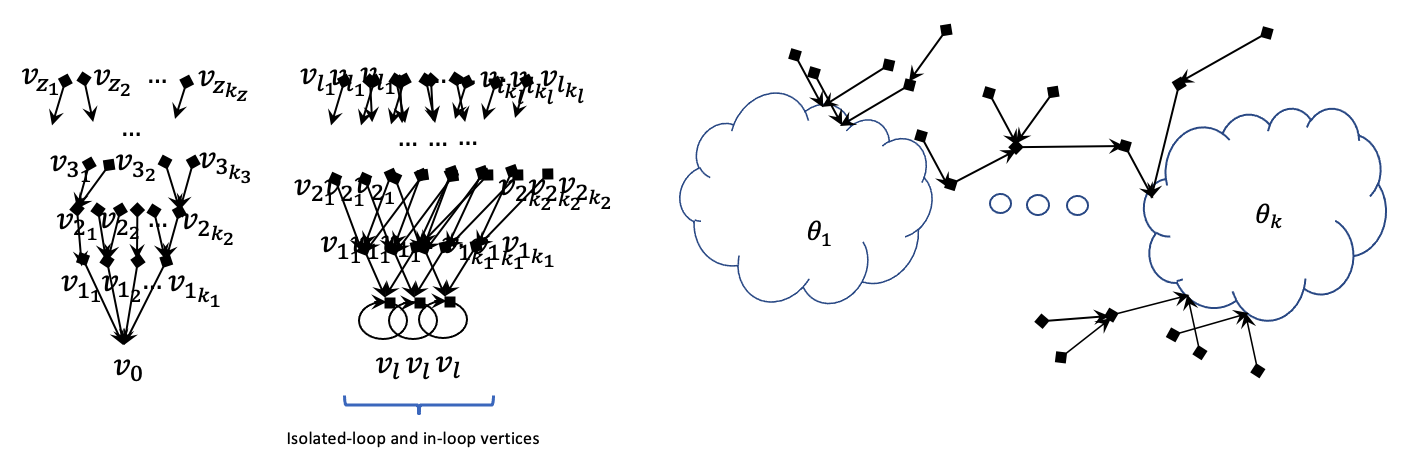}
\caption{$TCC$ graph structure split to components: $v_0$, isolated-loop and in-loop, and the part $SC(G-G_0-G_l)$.} \label{figstr}
\label{figg}
\end{figure}
\end{proof}

\section{Conclusion}
An applied problem of medical area is considered that shows that among the diversity of classification algorithms, there is a practical need to develop new algorithms that assign objects to one predetermined class through the sequence of classifications and transformations. This can be considered as a control problem, or a management problem, however, this is also a particular discrete mathematical problem modelled with the transition graphs. Sequential steps of classification consist of definition of the class of object, application of an action attached to the class of object, and transition to the updated class of this object. On a practical medical level, the study of such algorithms is related to the area of precision/personalised medicine. This paper describes admissible structures of graphs of transitions between classes, depending on properties of actions of classes - deterministic, multivariate, stochastic. These properties determine the structure of transition graphs that allow us to reveal its defects, knowing how to improve its current structure with achieving the goal of the $TCC$ process - classification to the target class. \\

{\bf{Acknowledgments}} 

This paper is partially supported by by grant №21SC-BRFFR-1B029 of the Science Committee of MESCS RA.


\begin{thebibliography}{8}

\bibitem{Aslanyan1975}
Aslanyan, L.: On a recognition method, based on partitioning of classes by the disjunctive normal forms. Kibernetika \textbf{5}, 103--110 (1975)

\bibitem{Aslanyan1978}
Aslanyan, L., Karakhanyan, V., Torosyan B.: On compactness of subsets of n-dimensional unite cube. In Akad. Nauk SSSR Dokl. \textbf{19}, 781--785 (1978)

\bibitem{Aslanyan1979}
Aslanyan, L.: The discrete isoperimetry problem and related extremal problems for discrete spaces. Problemy kibernetiki \textbf{36}, 85--128 (1979)

\bibitem{Aslanyan2001}
Aslanyan L., Zhuravlev Yu.: Logic separation principle. Computer Science and Information Technologies Conference, Yerevan, September 17-20, 151--156 (2001)

\bibitem{Aslanyan2017}
Aslanyan, L., Sahakyan, H.: The splitting technique in monotone recognition. Discrete Applied Mathematics \textbf{216}, 502--512 (2017)

\bibitem{Aslanyan2021}
Aslanyan, L., Krasnoproshin, V., Ryazanov V., Sahakyan, H.: Logical-combinatorial approaches in dynamic recognition problems. Mathematical Problems of Computer Science \textbf{54}, 96--107 (2010)

\bibitem{Aslanyan2022}
Aslanyan, L., Gishyan, K., Sahakyan, H.: Deterministic Recursion in Target Class Classification. Proceedings of the 13th Conference on Data analysis methods for software systems, Vilnius University Proceedings \textbf{31}(6), 6 (2016), doi: 10.15388/DAMSS.13.2022

\bibitem{Bollobas1998}
Bollobas, B.: Modern graph theory. Vol. 184, Springer Science and Business Media (1998)

\bibitem{Bongard1967}
Bongard, M.: Problem of Cognition (in Russian), Fizmatgiz, Moscow, (1967)

\bibitem{Braverman1962}
Braverman, E.: Experiments on machine learning to recognize visual patterns. Automation and Remote Control, \textbf{23}, 315–327 (1962)

\bibitem{Chakraborty2013}
Chakraborty, B.: Statistical methods for dynamic treatment regimes. Springer-Verlag, doi: 10:978–1 (2013)

\bibitem{Chervonenkis1999}
Chervonenkis, Ya., Vapnik, V.: Pattern recognition Theory (in Russian). Nauka, Moscow (1974)

\bibitem{Even-Zohar2001}
Even-Zohar, Y., Roth, D.: A sequential model for multi-class classification. arXiv preprint cs/0106044 (2001)

\bibitem{Godsil2001}
Godsil, C. and Royle G.: Algebraic graph theory. volume 207, Springer Science and Business Media (2001)

\bibitem{Harary1069}
Harary, F.: Book title. Graph Theory, Addison-Wesley, Reading, MA (1969)

\bibitem{Hopcroft2001}
Hopcroft, J., Motwani, R., Ullman, J.: Introduction to automata theory, languages, and computation. Acm Sigact News, \textbf{32}(1), 60--65 (2001)

\bibitem{Hu2017}
Hu, Yi-Qi, Hong, Qian, Yang, Yu: Sequential classification-based optimization for direct policy search. In Proceedings of the AAAI Conference on Artificial Intelligence, vol. 31, no. 1 (2017)

\bibitem{Johnson2016}
Johnson, A., Pollard, T., Shen L., et al.: MIMIC-III, a freely accessible critical care database. Sci Data 3, 160035 (2016). https://doi.org/10.1038/sdata.2016.35

\bibitem{Bang-Jensen2018}
Bang-Jensen, J., Gutin, G. eds.: Classes of directed graphs, Cham: Springer (2018)

\bibitem{Khan2014}
Khan, S., Madden, M.: One-class classification: taxonomy of study and review of techniques. The Knowledge Engineering Review \textbf{29}(3), 345-374 (2014)

\bibitem{Koko2013}
Koko, S., Capponi, C.: On multi-class classification through the minimization of the confusion matrix norm. IIn Asian Conference on Machine Learning, pp. 277–292. PMLR (2013)

\bibitem{Leng2015}
Leng, Qian, Qi, Honggang, Miao, Jun, Zhu, Wentao: One-class classification with extreme learning machine. In: Mathematical problems in engineering, (2015)

\bibitem{Mohri2018}
Mohri, M., Rostamizadeh, A., Talwalkar, A.: Foundations of machine learning, MIT press (2018)

\bibitem{Murphy2005}
Murphy, S.: An experimental design for the development of adaptive treatment strategies. Statistics in medicine, \textbf{24}(10), 1455–-1481 (2005)

\bibitem{Rabiner1989}
Rabiner, L.: A tutorial on hidden markov models and selected applications in speech recognition. Proceedings of the IEEE, 77(2):257–286 (1989)

\bibitem{Sahakyan2006}
Sahakyan, H., Aslanyan L.: Numerical characterization of n-cube subset partitioning. Electronic Notes in Discrete Mathematics \textbf{27}, 3--4 (2006)

\bibitem{Sahakyan2009}
Sahakyan, H.: Numerical characterisation of n-cube subset partitioning. Discrete Applied Mathematics \textbf{157}(9), 2191–-2197 (2009)

\bibitem{Sutton1999}
Sutton, R., Barto, A.: Reinforcement learning: An introduction. MIT press (2018)

\bibitem{Vaintsvaig1973}
Vaintsvaig, M.: Pattern recognition learning algorithm kora. Moscow, Sovetskoe Radio, 110-–116 (1973)

\bibitem{Zhang2019}
Zhang, Z. et al.: Reinforcement learning in clinical medicine: a method to optimize dynamic treatment regime over time. Annals of translational medicine \textbf{7}(14), 99--110 (2019)

\bibitem{Zhou2012}
Zhou, X., Yang, C., Gui, W.: State transition algorithm. Journal of Industrial and Management Optimization \textbf{8}(4), 1039--1056 (2012)

\bibitem{Zhuravlev1971}
Zhuravlev, Yu., Nikiforov, V.: Recognition algorithms based on computation of estimatese. Kibernetika \textbf{7}(3), 387–-400 (1971)

\bibitem{Zhuravlev1998}
Zhuravlev, Yu.: Selected scientific works. Moscow, Magistr (1998)

\bibitem{Zhuravlev2014}
Zhuravlev, Yu., Aslanyan, L., Ryazanov, V.: Analysis of a training sample and classification in one recognition model. Pattern recognition and image analysis \textbf{24}(3), 347--352 (2014)

\bibitem{Zhuravlev2017}
Zhuravlev, Yu., Aslanyan, L., Ryazanov, V., Sahakyan, H.: Application driven inverse type constraint satisfaction problems. Pattern recognition and image analysis \textbf{27}(3), 418--425 (2017)

\bibitem{Zhuravlev2019}
Zhuravlev, Yu., Ryazanov, V., Aslanyan, L., Sahakyan, H.: On a classification method for a large number of classes. Pattern Recognition and Image Analysis \textbf{29}(3), 366--376 (2019)

\bibitem{Zhuravlev2020}
Zhuravlev, Yu., Ryazanov, V., Ryazanov, Vl., Sahakyan H.: Comparison of different dichotomous classification algorithms. Pattern Recognition and Image Analysis \textbf{30}(3), 303--314 (2020)
\end{thebibliography}
\end{document}